\newtheorem*{theorem}{Theorem}
\newcommand\blfootnote[1]{%
  \begingroup
  \renewcommand\thefootnote{}\footnote{#1}%
  \addtocounter{footnote}{-1}%
  \endgroup
}
\title{Using Natural Language for \\
Reward Shaping in Reinforcement Learning}
\author{Prasoon Goyal \and
Scott Niekum \and 
Raymond J. Mooney
\affiliations
Department of Computer Science \\
The University of Texas at Austin \\
\emails
\texttt{\{pgoyal, sniekum, mooney\}@cs.utexas.edu}
}
\begin{document}

\maketitle
\begin{abstract}
Recent reinforcement learning (RL) approaches have shown strong performance in complex domains such as Atari games, but are often highly sample inefficient.
A common approach to reduce interaction time with the environment is to use reward shaping, which involves carefully designing reward functions that provide the agent intermediate rewards for progress towards the goal. 
However, designing appropriate shaping rewards is known to be difficult as well as time-consuming. 
In this work, we address this problem by using natural language instructions to perform reward shaping. 
We propose the LanguagE-Action Reward Network (LEARN), a framework that maps free-form natural language instructions to intermediate rewards based on actions taken by the agent. These intermediate language-based rewards can seamlessly be integrated into any standard reinforcement learning algorithm. 
We experiment with Montezuma's Revenge from the Atari Learning Environment, a popular benchmark in RL. 
Our experiments on a diverse set of 15 tasks demonstrate that, for the same number of interactions with the environment, language-based rewards lead to successful completion of the task 60\% more often on average, compared to learning without language. 

\end{abstract}

\blfootnote{Supplementary material link: \url{https://arxiv.org/abs/1903.02020}}
\section{Introduction}

Reinforcement learning (RL) has enjoyed much recent success in domains ranging from game-playing to real robotics tasks.
However, to make reinforcement learning useful for large-scale real-world applications, it is critical to be able to design reward functions that accurately and efficiently describe tasks.
For the sake of simplicity, a common strategy is to provide the agent with sparse rewards---for example, positive reward upon reaching a goal state, and zero reward otherwise.
However, it is well-known that learning is often difficult and slow in sparse reward settings \cite{vevcerik2017leveraging}.  By contrast, dense rewards can be easier to learn from, but are significantly more difficult to specify.
In this work, we address this issue by using natural language to provide dense rewards to RL agents in a manner that is easy to specify.

Consider the scenario in Figure~\ref{fig:motivation} from the Atari game Montezuma's Revenge. Suppose we want the agent to go to the left while jumping over the skull (as shown in the blue trajectory). If the agent is given a positive reward only when it reaches the end of the desired trajectory, it may need to spend a significant amount of time exploring the environment to learn that behavior.  Giving the agent intermediate rewards for progress towards the goal can help, a technique known as ``reward shaping" \cite{ng1999policy}. However, designing intermediate rewards is hard, particularly for non-experts.

\begin{figure}
\begin{center}
\includegraphics[width=0.60 \columnwidth]{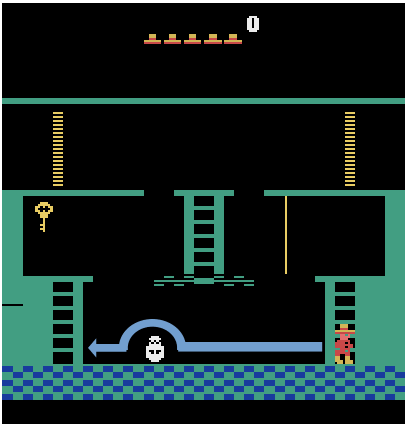}
\end{center}
\caption{An agent exploring randomly to complete the task described by the blue trajectory may need considerable amount of time to learn the behavior. By giving natural language instructions like ``Jump over the skull while going to the left'', we can give intermediate signals to the agent for faster learning.}
\label{fig:motivation}
\end{figure}

Instead, we propose giving the agent 
intermediate rewards using instructions in natural language. For instance, the agent can be given the following instruction:``Jump over the skull while going to the left'' to provide intermediate rewards that accelerate learning. 
Since natural language instructions can easily be provided even by non-experts, it will enable them to teach RL agents new skills more conveniently.

The main contribution of this work is a new framework
which takes arbitrary natural language instruction and the trajectory executed by the agent so far, and makes a prediction whether the agent is following the instruction, which can then be used as an intermediate reward. 
Our experiments show that by using such reward functions, we can speed up learning in sparse reward settings by guiding the exploration of the agent.

Using arbitrary natural language statements within reinforcement learning presents several challenges. First, a mapping between language and objects/actions must implicitly or explicitly be learned, a problem known as \emph{symbol grounding} \cite{harnad1990symbol}. For example, to make use of the instruction, ``Jump over the snake'', the system must be able to ground ``snake'' to appropriate  pixels in the current state (assuming the state is represented as an image) and ``jump'' to the appropriate action in the action space. Second, natural language instructions are often incomplete. For instance, it is possible that the agent is not directly next to the snake and must walk towards it before jumping. Third, natural language inherently involves ambiguity and variation. This could be due to different ways of referring to the objects/actions (e.g. ``jump'' vs. ``hop''), different amounts of information in the instructions (e.g. ``Jump over the snake" vs. ``Climb down the ladder after jumping over the snake"), or the level of abstraction at which the instructions are given (e.g. a high-level subgoal: ``Collect the key"
vs. low-level instructions: ``Jump over the obstacle. Climb up the ladder and jump to collect the key.") 

Once an instruction has been interpreted, we incorporate it into the RL system as an additional reward (as opposed to other options like defining a distribution over actions), since modifying the reward function allows using any standard RL algorithm for policy optimization.
We evaluate our approach on Montezuma's Revenge, a challenging game in the Atari domain \cite{bellemare2013arcade}, demonstrating that it effectively uses linguistic instructions to significantly speed learning, while also being robust to variation in instructions.

\section{Overview of the Approach}
\label{sec:overview}

A Markov Decision Process (MDP) can be defined by the tuple $\langle S, A, T, R, \gamma \rangle$, where $S$ is a set of states, $A$ is a set of actions, $T : S \times A \times S \rightarrow [0,1]$ describes transition probabilities, $R : S \times A \rightarrow \mathds{R}$ is a reward function mapping the current state $s_{t}$ and current action $a_{t}$ to real-valued rewards, and $\gamma < 1$ is a discount factor. In this work, we consider an extension of the MDP framework, defined by  $\langle S, A, R, T, \gamma, l \rangle$, where $l \in L$ is a language command describing the intended behavior (with $L$ defined as the set of all possible language commands). We denote this language-augmented MDP as MDP+L. Given an MDP(+L), reinforcement learning can be used to learn an optimal policy $\pi^{*} : S \rightarrow A$ that maximizes expected sum of rewards.
We use $R_{ext}$ (``extrinsic") to denote the MDP reward function above, to avoid confusion with language-based rewards that we define in Section ~\ref{sec:rl-with-lang}.

\begin{figure}
\includegraphics[width=\columnwidth]{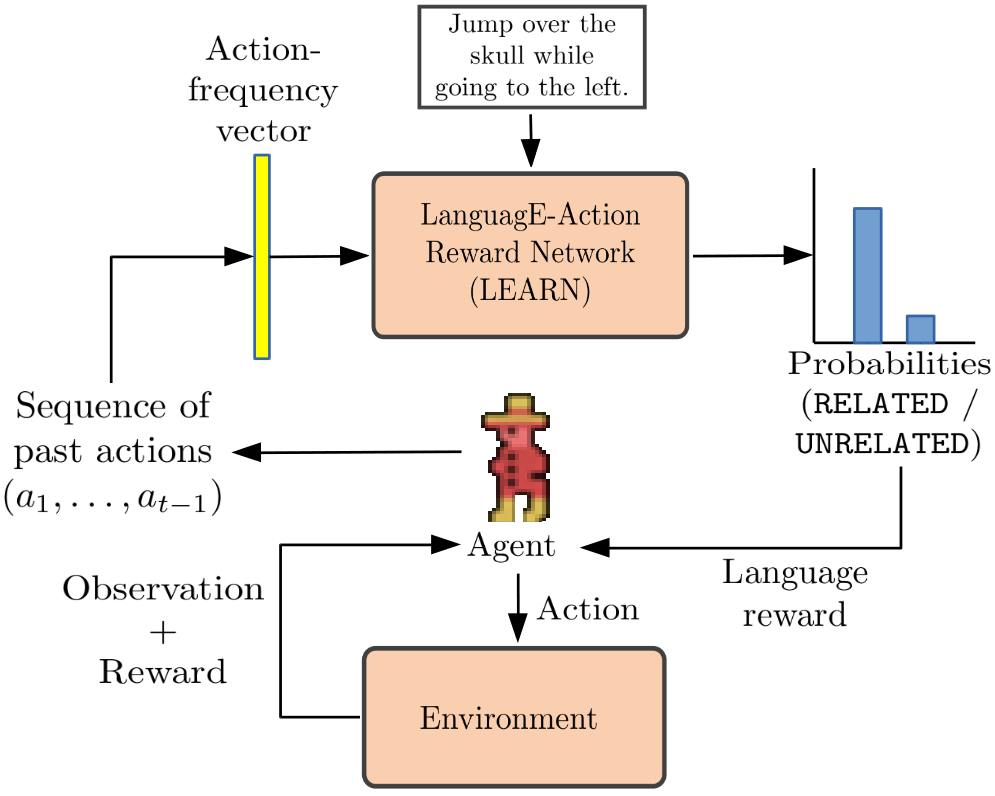}
\caption{Our framework consists of the standard RL module containing the agent-environment loop, augmented with a LanguagE-Action Reward Network (LEARN) module. 
}
\label{fig:schematic}
\end{figure}

In order to find an optimal policy in an MDP+L, we use a two-phase approach:

\paragraph{LanguagE-Action Reward Network (LEARN)} 
In this step, we train a neural network that takes paired (trajectory, language) data from the environment and predicts if the language describes the actions within the trajectory. To train the network, we collect natural language instructions for trajectories in the environment (Section~\ref{sec:lang-network}).

\paragraph{Language-aided RL} This step involves  using RL to learn a policy for the given MDP+L.
Given the trajectory executed by the agent so far and the language instruction, we use LEARN to predict whether the agent is making progress and use that prediction as a shaping reward (Section~\ref{sec:rl-with-lang}).
Note that since we are only modifying the reward function, this step is agnostic to the particular choice of RL algorithm.
A schematic diagram of the approach is given in Figure~\ref{fig:schematic}.

\section{LanguagE-Action Reward Network}
\label{sec:lang-network}

\subsection{Model}
\label{sec:learn-model}

LEARN takes in a trajectory and a language description and predicts whether the language describes the actions in the trajectory.
More formally, given a trajectory $\tau$, we create \emph{action-frequency vectors} from it as follows: \\
    1. Sample 
    two distinct timesteps $i$ and $j$ (such that $i < j$) from the set $\{1, \ldots, |\tau|\}$, where $|\tau|$ denotes the number of timesteps in $\tau$. Let $\tau[i:j]$ denote the segment of $\tau$ between timesteps $i$ and $j$. \\
    2. Create an \emph{action-frequency vector} $f$ from the actions in $\tau[i:j]$, where the dimensionality of $f$ is equal to the number of actions in the MDP+L, and the $k^{th}$ component of $f$ is the fraction of timesteps action $k$ appears in $\tau[i:j]$. \\

Using the above process, we create a dataset of $(f, l)$ pairs from a given set of $(\tau, l)$ pairs. 
Positive examples are created by sampling $f$ from a given trajectory $\tau$ and using the language description $l$ associated with $\tau$. Negative examples are created by (1) sampling an action-frequency vector $f$ from a given trajectory $\tau$, but choosing an alternate language description $l'$ sampled uniformly at random from the data excluding $l$,
or (2) creating a random action-frequency vector $f'$ and pairing it with the language description $l$.
These examples are used to train a neural network, as described below. 
Thus, given a pair $(f, l)$, the network learns to predict whether the action-frequency vector $f$ is related to the language description $l$ or not.

\paragraph{Neural network architecture} The action-frequency vector is passed through a sequence of fully-connected layers to get an encoded action vector with dimension $D_{1}$. Similarly, the natural language instruction is encoded into a vector with dimension $D_{2}$ as described below. The encoded action-frequency vector and language vector are then concatenated, and further passed through another sequence of fully-connected layers, each of dimension $D_3$, followed by a softmax layer. The final output of the network is a probability distribution over two classes -- {\tt RELATED} and {\tt UNRELATED}, corresponding to whether the action-frequency vector $f$ can be explained by the language instruction $l$.

\paragraph{Language encoder} To embed the natural language instruction, we experimented with three models:\\
(1) \textbf{InferSent} : In this model, we used a pretrained sentence embedding model \cite{conneau2017supervised}, which embeds sentences into a 4096-dimensional vector space. The 4096-dimensional vectors were projected to $D_2$-dimensional vectors using a fully-connected layer. We train only the projection layer during training, keeping the original sentence embedding model fixed.\\
(2) \textbf{GloVe+RNN} : In this model, we represent the sentence using pretrained 50-dimensional GloVe word embeddings \cite{pennington2014glove}, and train a two-layer GRU \cite{cho2014learning} encoder on top of it, while keeping the word embeddings fixed. We used the mean of the output vectors from the top layer as the encoding of the sentence. The hidden state size of the GRUs was set to $D_2$.\\
(3) \textbf{RNNOnly} : This model is identical to Glove+RNN, except instead of starting with pretrained GloVe vectors, we randomly initialize the word vectors and train both the word embeddings and the two-layer GRU encoder.

These three models trade-off prior domain knowledge with flexibility -- InferSent model starts with the knowledge of sentence similarity and is least flexible, GloVe+RNN model starts with word vectors and is more flexible in combining them to generate sentence embeddings, while RNNOnly starts with no linguistic knowledge and is completely flexible while learning word and sentence representations.

Our complete neural network architecture is shown in Figure~\ref{fig:nn-arch}. $D_1$, $D_2$ and $D_3$ were tuned using validation data.

\begin{figure}
\includegraphics[width=\columnwidth]{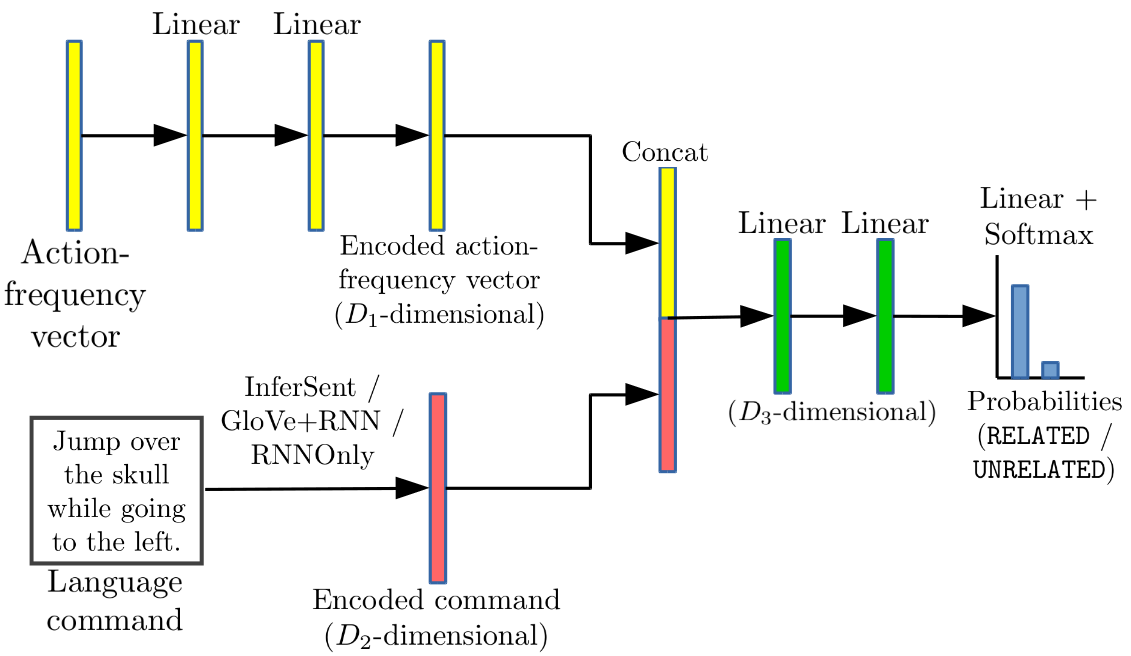}
\caption{Neural network architecture for LEARN (Section~\ref{sec:learn-model})
}
\label{fig:nn-arch}
\end{figure}

\paragraph{Training procedure} We used backpropagation with an Adam optimizer \cite{kingma2014adam} to train the above neural network for 50 epochs to minimize cross-entropy loss.

\subsection{Data Collection}
\label{sec:data-collection}

To collect data for training LEARN, we generate trajectories in the environment, which may or may not be directly relevant for the final task(s). Then, for each trajectory, we get natural language annotations from human annotators, which are in the form of instructions that the agent should follow to go from the initial state of the trajectory to the final state. 

In our experiments, we used 20 trajectories from the Atari Grand Challenge dataset \cite{kurin2017atari}, which contains hundreds of crowd-sourced trajectories of human gameplays on 5 Atari games, including Montezuma's Revenge. The 20 trajectories we used contain a total of about 183,000 frames. From these trajectories, we extracted 2,708 equally-spaced clips (with overlapping frames), each three-seconds long.

To obtain language descriptions for these clips, we used Amazon Mechanical Turk.  Workers  were shown clips from the game and asked to provide corresponding language instructions.
Each annotator was asked to provide descriptions for 6 distinct clips, while each clip was annotated by 3 people.

To filter out bad annotations, we manually looked at each set of 6 annotations and discarded the set if any of them were generic statements (e.g. ``Good game!", ``Well played."), or if all the descriptions were very similar to one another (therefore suggesting that they are probably not related to the corresponding clips). After filtering, we obtained a total of 6,870 language descriptions. Note that the resulting dataset may still be quite noisy, since our filtering process doesn't explicitly check if the language instructions are related to the corresponding clips, nor do we correct for any spelling or grammatical errors. 

More details about the Amazon Mechanical Turk interface and example descriptions are included in the supplementary material.

\section{Using Language-based Rewards in RL}
\label{sec:rl-with-lang}

To incorporate language information into RL, we use LEARN's predictions to generate intermediate rewards.
Given the sequence of actions $a_{1}, \ldots, a_{t-1}$ executed by the agent until timestep $t$ and the language instruction $l$ associated with the given MDP+L, we create an action-frequency vector $f_{t}$, by setting the $k^{th}$ component of $f$ equal to the fraction of timesteps action $k$ appears in $a_{1}, \ldots, a_{t-1}$. 
The resulting action-frequency vector $f$ and the language instruction $l$ are passed to LEARN.
Let the output probabilities corresponding to classes {\tt RELATED} and {\tt UNRELATED} be denoted as $p_R(f_{t})$ and $p_U(f_{t})$.
Note that since $l$ is fixed for a given MDP+L, $p_{R}(f_{t})$ and $p_{U}(f_{t})$ are functions of only the current action-frequency vector $f_{t}$.

Intuitively, trajectories that contain actions described by the language instruction more often will have higher values of $p_{R}(f_{t})$, compared to other trajectories. For instance, if the language instruction is ``Jump over the skull while going to the left", then trajectories with high frequencies corresponding to the ``jump" and ``left" actions will be considered more related to the language by LEARN.
Therefore, we can use these probabilities to define intermediate language-based rewards.
These intermediate rewards will enable the agent to explore more systematically, by choosing relevant actions more often than irrelevant actions.

To map the probabilities to language-based shaping rewards, we define a potential function for the current timestep as $\phi({f_{t}}) = p_{R}(f_{t}) - p_{U}(f_{t})$. The intermediate language-based reward is then defined as $R_{lang}(f_{t}) = \gamma \cdot \phi(f_{t}) - \phi(f_{t-1})$, where $\gamma$ is the discount factor for the MDP+L. We show in the supplementary material that a policy that is optimal under the original reward function ($R_{ext}$) is also optimal under the new reward function ($R_{ext} + R_{lang}$).

\section{Experimental Evaluation}

To validate the effectiveness of our approach, we conducted experiments on the Atari game Montezuma's Revenge. The game involves controlling an agent to navigate around multiple rooms. There are several types of objects within the rooms -- (1) ladders, ropes, doors, etc. that can be used to navigate within a room, (2) enemy objects (such as skulls and crabs) that the agent needs to escape from, (3) keys, daggers, etc. that can be collected. A screenshot from the game is included in Figure~\ref{fig:motivation}.
We selected this game because the rich set of objects and interactions allows for a wide variety of natural language descriptions.

The first step involved collecting (trajectory, language) pairs in the game as described in Section~\ref{sec:data-collection}.
The (trajectory, language) pairs were split into training and validation sets, such that there is no overlap between the frames in the training set and the validation set. In particular, Level 1 of Montezuma's revenge consists of 24 rooms, of which we use 14 for training, and the remaining 10 for validation and testing. The set of objects in both training and validation/test set are the same, but each room has only a subset of these objects arranged in different layouts.
We create a training dataset with 160,000 (action-frequency vector, language) pairs from the training set, and a validation dataset with 40,000 pairs from the validation set, which were used to train LEARN.

We define a set of 15 diverse tasks in multiple rooms, each of which requires the agent to go from a fixed start position to a fixed goal position while interacting with some of the objects present in the path.\footnote{Although the tasks (and corresponding descriptions) involve interactions with objects, we observe that just using actions, as we do in our approach, already gives improvements over the baseline, likely because most objects can be interacted with only in one way.}
For each task, the agent gets an extrinsic reward of +1 from the environment for reaching the goal, and an extrinsic reward of zero in all other cases. 

For each of the tasks, we generate a reference trajectory, and use Amazon Mechanical Turk to obtain 3 descriptions for the trajectory. We use each of these descriptions as language commands in our MDP+L experiments, as described below. Note that we do not use the reference trajectories to aid learning the policy in MDP+L; they are only used to collect language commands to be used in our experiments.

We use Proximal Policy Optimization, a popular on-policy RL algorithm \cite{schulman2017proximal}.
We train the policy for 500,000 timesteps for all our experiments.

\subsection{How much does language help?}

\paragraph{Settings} We experiment with 2 different RL setups to evaluate how much using language-based rewards help:\\
(1) \textbf{ExtOnly}: In this setup, we use the original environment reward, without using language-based reward. This is the standard MDP setup, and serves as our baseline.\\
(2) \textbf{Ext+Lang}: In this setup, in addition to the original environment reward that the agent gets on completing the task successfully, we also provide the agent potential-based language reward $R_{lang}$ at each step, as described in Section ~\ref{sec:rl-with-lang}.

\paragraph{Metrics} Performance is evaluated using two metrics:\\
(1) \textbf{AUC}: From each policy training run, we plot a graph with the number of timesteps on the x-axis and the number of successful episodes on the y-axis. The area under this curve is a measure of how quickly the agent learns, and is the metric we use to compare two policy training runs.\\
(2) \textbf{Final Policy}: To compare the final learned policy with ExtOnly and Ext+Lang, we perform policy evaluation at the end of 500,000 training steps. For each policy training run, we use the learned policy for an additional 10,000 timesteps without updating it, and record the number of successful episodes.

\paragraph{Hyperparameters} For the Ext+Lang setup, we perform validation over the three types of language encoders described in Section~\ref{sec:rl-with-lang} (InferSent / GloVe+RNN / RNNOnly). For each type of language encoder, we use the LEARN model with the best accuracy on the validation data. Further, we define the joint reward function as $R_{total} = R_{ext} + \lambda R_{lang}$. 
The type of language encoder and the hyperparameter $\lambda$ are selected using validation as described below.

\begin{figure}
\includegraphics[width=\columnwidth]{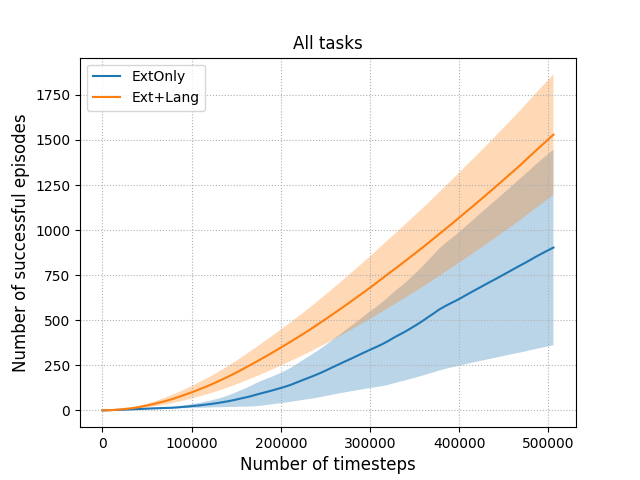}
\caption{Comparison of different reward functions: The solid lines represent the mean successful episodes averaged over all tasks, and the shaded regions represent 95\% confidence intervals.}
\label{fig:nolang-vs-lang-all}
\end{figure}

We treat each task as the test task in turn, using the remaining 14 tasks to find the best language encoder and $\lambda$. For each setting of the hyperparameters, we run policy training on all the validation tasks and each of the 3 descriptions, and compute AUC for each run. Since AUCs across tasks differ by orders of magnitude (due to varying task difficulties), we aggregate the scores across tasks as follows -- for each validation task, we compute a rank for each setting of the hyperparameters based on AUC, and then for each setting of the hyperparameters, we compute its average rank across the validation tasks. The setting with the best average rank is used for the test task.

\paragraph{Results} At test time, we performed 10 policy learning runs with different initializations for each task and each description. The results, averaged across all tasks and descriptions, are summarized in Figure~\ref{fig:nolang-vs-lang-all}, from which we can conclude that 
Ext+Lang learns much faster than ExtOnly, demonstrating that using natural language instructions for reward shaping is effective. In particular, the average number of successful episodes for ExtOnly after 500,000 timesteps is 903.12, while Ext+Lang achieves that score only after 358,464 timesteps, which amounts to a 30\% speed-up. Alternately, after 500,000 timesteps, Ext+Lang completes 1529.43 episodes on average, compared to 903.12 for ExtOnly, thereby giving a 60\% relative improvement.

\paragraph{Statistical Significance Tests} For each task, we perform an unpaired t-test between 10 runs of policy training with random initializations using ExtOnly reward function and 30 runs of policy training with random initializations using Ext+Lang reward function (3 descriptions $\times$ 10 runs per description), for both metrics. \\
(1) \textbf{AUC}: Of the total 15 tasks, Ext+Lang gives statistically significant improvement in 11 tasks, leads to statistically significant deterioration in 1 task, and makes no statistical difference in the remaining 3 tasks.
This agrees with the conclusions from Figure~\ref{fig:nolang-vs-lang-all}, that using language-based reward improves the efficiency of policy training on average. \\
(2) \textbf{Final Policy}: We observe that the number of successful episodes for the final policies is statistically significantly greater for Ext+Lang compared to ExtOnly in 8 out of 15 tasks, while the difference is not significant in the remaining 7 tasks. Further, averaged across all tasks, the number of successful episodes is more than twice with Ext+Lang compared to ExtOnly. These results suggests that using natural language for reward shaping often helps learn a better final policy, and rarely (if ever) results in a worse policy.

\subsection{Analysis of Language-based Rewards}
\label{sec:analysis}

\begin{table*}
\footnotesize
\centering
\begin{tabular}{|c|l|r|r|r|r|r|r|r|r|}
\hline
\multicolumn{1}{|c|}{\multirow{2}{*}{\textbf{Task Id}}} & \multicolumn{1}{c|}{\multirow{2}{*}{\textbf{Description}}}                                                                         & \multicolumn{8}{c|}{\textbf{Correlation coefficients of different actions}}                                                                                                                                                                                                                                                                                                                            \\ \cline{3-10} 
\multicolumn{1}{|c|}{}                                  & \multicolumn{1}{c|}{}                                                                                                              & \multicolumn{1}{c|}{\texttt{NO-OP}} & \multicolumn{1}{c|}{\texttt{JUMP}} & \multicolumn{1}{c|}{\texttt{UP}} & \multicolumn{1}{c|}{\texttt{RIGHT}} & \multicolumn{1}{c|}{\texttt{LEFT}} & \multicolumn{1}{c|}{\texttt{DOWN}} & \multicolumn{1}{c|}{\texttt{\begin{tabular}[c]{@{}c@{}}JUMP-\\ RIGHT\end{tabular}}} & \multicolumn{1}{c|}{\texttt{\begin{tabular}[c]{@{}c@{}}JUMP-\\ LEFT\end{tabular}}} \\ \hline \hline
\multirow{3}{*}{4}                                      & climb down the ladder                                                                                                              & -0.60                               & -0.58                              & -0.59                            & -0.61                               & -0.55                              & 0.07                               & -0.57                                                                               & -0.56                                                                              \\ \cline{2-10} 
                                                        & go down the ladder to the bottom                                                                                                   & -0.58                               & -0.58                              & -0.58                            & -0.60                               & -0.53                              & 0.09                               & -0.59                                                                               & -0.60                                                                              \\ \cline{2-10} 
                                                        & move on spider and down on the lader                                                                                               & -0.58                               & -0.54                              & -0.59                            & -0.60                               & -0.49                              & 0.10                               & -0.58                                                                               & -0.56                                                                              \\ \hline \hline
\multirow{3}{*}{6}                                      & \begin{tabular}[c]{@{}l@{}}{\scriptsize go to the left and go under skulls and then down the ladder}\end{tabular}                              & -0.37                               & -0.40                              & -0.49                            & -0.43                               & 0.33                               & 0.16                               & -0.46                                                                               & -0.01                                                                              \\ \cline{2-10} 
                                                        & go to the left and then go down the ladder                                                                                         & -0.24                               & -0.26                              & -0.35                            & -0.31                               & 0.28                               & 0.36                               & -0.34                                                                               & -0.04                                                                              \\ \cline{2-10} 
                                                        & move to the left and go under the skulls                                                                                           & -0.16                               & -0.25                              & -0.60                            & -0.48                               & 0.27                               & -0.63                              & -0.52                                                                               & -0.40                                                                               \\ \hline \hline
\multirow{3}{*}{14}                                     & Jump once then down                                                                                                                & 0.00                                & 0.07                               & -0.15                            & -0.13                               & 0.51                               & 0.50                               & 0.09                                                                                & 0.52                                                                               \\ \cline{2-10} 
                                                        & go down the rope and to the bottom                                                                                                 & -0.03                               & 0.10                               & -0.16                            & 0.56                                & 0.54                               & 0.33                               & 0.28                                                                                & 0.01                                                                               \\ \cline{2-10} 
                                                        & jump once and climb down the stick                                                                                                 & 0.11                                & 0.11                               & 0.06                             & 0.04                                & 0.14                               & 0.40                               & 0.25                                                                                & 0.11                                                                               \\ \hline
\end{tabular}
\caption{Analysis of language-based rewards}
\label{tbl:analysis}
\end{table*}

In order to analyze if the language-based rewards generated from LEARN actually correlate with language descriptions for the task, we compute the Spearman's rank correlation coefficient between each component of the action-frequency vector and corresponding prediction from LEARN over the 500,000 timesteps of policy training.
Correlation coefficients averaged across 10 runs of policy training for some selected tasks are reported in Table~\ref{tbl:analysis}.
Figure~\ref{fig:per-task} shows the policy training curves for these selected tasks.

This analysis supports some interesting observations:\\
    (1) For task 4 with simple descriptions, only the \texttt{DOWN} action is positively correlated with language-based reward. All other actions have a strong negative correlation with language-based reward, suggesting that the proposed approach discourages those actions, thereby aiding exploration.\\
    (2) For task 6 with more complex descriptions, LEARN correctly predicts language rewards to be correlated with actions \texttt{LEFT} and \texttt{DOWN}. For the third description, since the description does not instruct going down, the language reward is negatively correlated with the \texttt{DOWN} action. Indeed, we notice in our experiments that we obtain statistically significant improvement in AUC for the first two descriptions, while no statistically significant difference for the third description.\\
    (3) Task 14 represents a failure case. Language rewards predicted by LEARN are not well-correlated with the description, and consequently, using language-based rewards results in statistically significant deterioration in AUC. In general, we observe that groundings produced by LEARN for descriptions involving the word ``jump" are noisy. We hypothesize that this is because (i) the \texttt{JUMP} action typically appears with other actions like \texttt{LEFT} and \texttt{RIGHT}, and (ii) humans would typically use similar words to refer to \texttt{JUMP}, \texttt{JUMP-LEFT} and \texttt{JUMP-RIGHT} actions. These factors make it harder for the network to learn correct associations.

Note that LEARN does not see action names used in Table~\ref{tbl:analysis} (\texttt{NO-OP}, \texttt{JUMP}, etc.); instead, actions are represented as ordinals from 0 through 17. Thus, we see that our approach successfully learns to ground action names to actions in the environment.\footnote{While there are a total of 18 actions, we only report the most common 8 actions in Table~\ref{tbl:analysis} for brevity. The omitted 10 actions jointly constitute less that 1\% of the actions in the training data.}

\def \taskplotscale{0.31}
\begin{figure*}
\centering
\subfigure{
\includegraphics[scale=\taskplotscale]{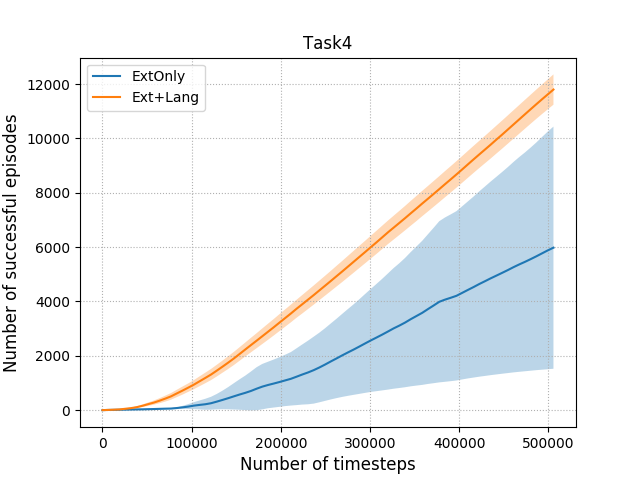}
}
\subfigure{
\includegraphics[scale=\taskplotscale]{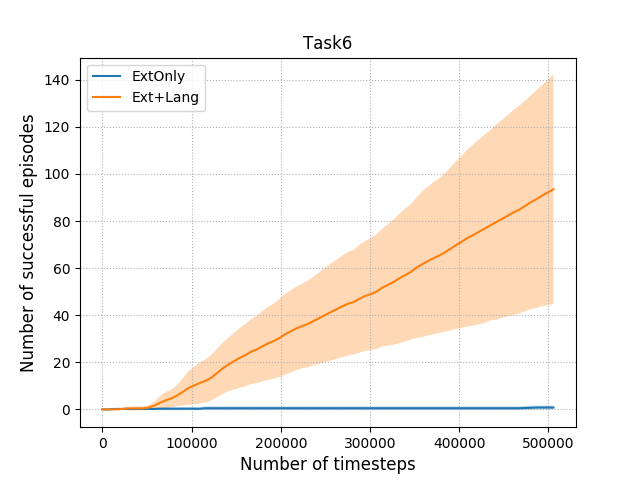}
}
\subfigure{
\includegraphics[scale=\taskplotscale]{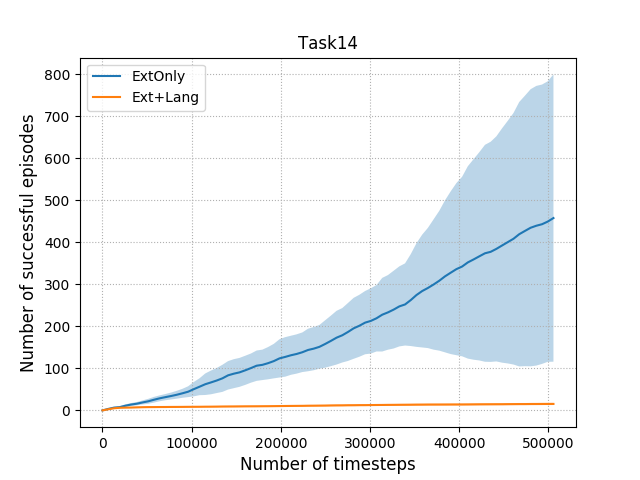}
} \\
\caption{Comparisons of different reward functions for selected tasks}
\label{fig:per-task}
\end{figure*}

\section{Related Work}


Prior work on combining RL and natural language can be divided into two classes. The first class uses reinforcement learning to solve NLP tasks, such as summarization \cite{paulus2017deep}, question-answering \cite{xiong2017dcn+} and dialog generation \cite{li2016deep}. The second class, in which our approach lies, uses natural language to aid RL.

Regarding methods that use NLP to help RL, some recent approaches map natural language to a reward function. \cite{williams2017learning} and \cite{arumugam2017accurately} map language to a reward function in an object-oriented MDP framework. However, these approaches use a predefined set of objects, object properties and spatial relations, and/or use simple language-based features, which is difficult to scale to more complex environments and instructions. Our approach, on the other hand, learns to ground natural language concepts to actions directly from data.

\cite{misra2017mapping} use natural language to describe the goal, which is combined with the state information to learn a policy in contextual bandit setting. However, they use distance from the goal and from reference trajectories for reward shaping.
\cite{kuhlmann2004guiding} map natural language to a set of rules which are then used to increase or decrease the probability of choosing an action during reinforcement learning. Extending this to complex environments would require engineering how each rule affects the probabilities of different actions.
Our approach, on the other hand, uses the natural language instruction itself for reward shaping, directly generating rewards from language, thereby reducing human effort.


\cite{branavan2012learning} extract features from natural language instructions, and incorporate them into the action-value function. More recently, \cite{bahdanau2018learning} proposed an adversarial learning framework wherein a discriminator distinguishes between a fixed set of good (instruction, state) pairs and (instruction, state) pairs generated by the current policy, and this is used as a reward function to simultaneously improve the policy.
A key difference between these approaches and our approach is that they learn linguistic features jointly during reinforcement learning, while we learn to map language to a reward function offline, which could be beneficial if interaction with the environment is expensive. However, our approach requires pairs of trajectories and natural language instructions for offline training.

\cite{branavan2012learning2} and \cite{kaplan2017beating} use natural language to do high-level planning.
These approaches are orthogonal to our work, in that these approaches can be used to generate subgoals at a high-level, whereas our approach can be used to make exploration faster at a lower-level.

Finally, our model is related to that in \cite{wang2018rcm-sil}, which also uses intermediate language-based rewards in RL. However, their goal is to use RL to improve natural language instruction-following, while our focus is on the reverse problem of using instructions to improve RL performance. 

\section{Conclusions and Future Work}

We propose LanguagE Action Reward Network (LEARN), a framework trained on paired (trajectory, language) data in an environment to predict if the actions in a trajectory match the language description. 
The outputs of the network are used to generate intermediate rewards for reinforcement learning. 
We show in our experiments that these language-based rewards can be used to train faster and learn a better policy for sparse reward settings.
Further, since the modality by which information is given to the agent is natural language, this approach can potentially be used even by non-experts to specify tasks to RL agents.

While our approach achieves promising improvements over the baseline, there are several possible extensions of the approach:\\
1) {\bf Temporal ordering:} Our approach aggregates the sequence of past actions into an action-frequency vector, thereby losing temporal information. Therefore a possible extension is to look at the complete action sequences. \\
2) {\bf State-based rewards:} Currently, the language-based reward is a function of only the past actions. As such, the model cannot utilize natural language descriptions that refer to objects in the state (e.g. ``Go towards the \emph{ladder}", ``avoid the \emph{skulls}".) Modelling the language-based reward as a function of both the past states and actions should allow the agent to benefit from such language descriptions. \\
3) {\bf Multi-step instructions:} The current approach only handles a single instruction. One way to handle multiple instructions is to have another module (trained / heuristic-based) to predict if a language instruction has been completed or not. This could then be used in conjunction with our current approach, where the agent starts following the first instruction, and transitions to the next one when this new module predicts that the current instruction has been completed.

\nocite{pytorchrl}

\section*{Acknowledgements}
This work was partially supported by an  NSF  NRI  grant  (IIS-1637736) and an Amazon Research Award.
\newpage
\bibliography{aaai}
\bibliographystyle{named}

\clearpage
\appendix

\section{Example Annotations}

Table~\ref{tbl:annotation-examples} shows 20 randomly selected annotations collected using Amazon Mechanical Turk (after the filtering process described in Section~\ref{sec:data-collection}). Note that the annotations have a significant amount of variation, both in terms of length and vocabulary. Further, several descriptions (1) contain spelling errors (e.g. ``climbling" in annotation 6 and ``dwon" in annotation 7), (2) are ill-formed (e.g. annotation 2) or (3) are not very informative (e.g. annotations 1 and 7). We do not filter out or correct these annotations, as the process requires significant manual effort. Thus, our method is able to extract useful information from these annotations even in the presence of noise.

\begin{table}[h]
\small
\centering
\begin{tabular}{|ll|}
\hline
1.  & wait                                                                                                                                  \\ \hline
2.  & using the ladder on standing                                                                                                          \\ \hline
3.  & going slow and climb down the ladder                                                                                                  \\ \hline
4.  & move down the ladder and walk left                                                                                                    \\ \hline
5.  & go left watch the trap and move on                                                                                                    \\ \hline
6.  & climbling down the ladder                                                                                                             \\ \hline
7.  & ladder dwon and running this away                                                                                                     \\ \hline
8.  & stay in place on the ladder.                                                                                                          \\ \hline
9.  & go down the ladder                                                                                                                    \\ \hline
10. & go right and climb up the ladder                                                                                                      \\ \hline
11. & just jump and little move to right side                                                                                               \\ \hline
12. & run all the way to the left.                                                                                                          \\ \hline
13. & go left jumping once                                                                                                                  \\ \hline
14. & go left                                                                                                                               \\ \hline
15. & \begin{tabular}[c]{@{}l@{}}move right and jump over green\\ creature then go down the ladder\end{tabular}                             \\ \hline
16. & hop over to the middle ledge                                                                                                          \\ \hline
17. & \begin{tabular}[c]{@{}l@{}}wait for the two skulls and dodge\\ them in the middle\end{tabular}                                        \\ \hline
18. & walk to the left and then jump down                                                                                                   \\ \hline
19. & jump to collected gold coin and little move                                                                                           \\ \hline
20. & \begin{tabular}[c]{@{}l@{}}wait for the platform to materialize then\\ walk and leap to your right to collect the coins.\end{tabular} \\ \hline
\end{tabular}
\caption{Examples of descriptions collected using Amazon Mechanical Turk}
\label{tbl:annotation-examples}
\end{table}

\section{Policy Invariance}

In this section, we show that using action-frequency vectors for reward shaping does not change the optimal policy.


\begin{theorem}
Let $M = \langle S, A, T, R, \gamma \rangle$ be a given MDP, and $R_{lang}(f_{t}) = \gamma \cdot \phi(f_{t}) - \phi(f_{t-1})$ be a shaping reward function, where $f_{t}$ is the action-frequency vector corresponding to actions $a_{1}, \ldots, a_{t}$ as defined in Section~\ref{sec:learn-model}, and $\phi$ be a potential function. Then, an optimal policy in $M$ is also an optimal policy in the MDP $M' = \langle S, A, T, R+F, \gamma \rangle$.
\end{theorem}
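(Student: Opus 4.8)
The plan is to recognize $R_{lang}(f_t) = \gamma\cdot\phi(f_t) - \phi(f_{t-1})$ as a \emph{potential-based} shaping term and to exploit the telescoping that such terms exhibit along any trajectory, exactly as in the potential-invariance argument of \cite{ng1999policy}. The essential point is that the $\gamma$-factor multiplying $\phi(f_t)$ is tuned precisely so that, once the shaping rewards are discounted and summed over a rollout, consecutive terms cancel and the only survivor is a boundary term at the start of the episode.

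First I would fix an arbitrary policy $\pi$ and an initial state $s_0$, and consider any realized trajectory $s_0, a_1, s_1, a_2, \ldots$ together with its induced action-frequency vectors $f_0, f_1, f_2, \ldots$, where $f_0$ is the frequency vector of the empty action sequence (a fixed constant). Writing the contribution of the shaping reward to the discounted return, I would compute
\[
\sum_{t\ge 1} \gamma^{\,t-1} R_{lang}(f_t)
= \sum_{t\ge 1}\bigl(\gamma^{\,t}\phi(f_t) - \gamma^{\,t-1}\phi(f_{t-1})\bigr),
\]
which telescopes to $\lim_{T\to\infty}\gamma^{\,T}\phi(f_T) - \phi(f_0)$. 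Since $\phi(f) = p_R(f) - p_U(f)$ is a difference of probabilities and hence bounded in $[-1,1]$, and $\gamma<1$, the tail term vanishes, leaving exactly $-\phi(f_0)$. Taking expectations over trajectories, I would conclude that the shaped value function satisfies $V^{\pi}_{M'}(s_0) = V^{\pi}_{M}(s_0) - \phi(f_0)$ for \emph{every} policy $\pi$.

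The theorem then follows immediately: $f_0$ is the same fixed vector for every trajectory and every policy, so the shaping merely subtracts a policy-independent constant from the value of each policy. The ordering of policies by value—and in particular the $\arg\max$—is therefore identical in $M$ and $M'$, so any optimal policy of $M$ is optimal in $M'$ (and conversely).

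The main obstacle is that $\phi$ depends on the action history through $f_t$ rather than on the Markov state $s_t$, so $R+F$ is not Markovian in $S$ and the classical statement of \cite{ng1999policy} does not apply verbatim. I would handle this by noting that the telescoping computation above never uses Markovianity of $\phi$—it is purely a statement about discounted returns along trajectories—so it goes through unchanged. Alternatively, to invoke the classical result as a black box, I would lift to the augmented MDP whose state is $(s_t, f_t, t)$; the frequency vector updates deterministically from the current action and timestep, so these augmented dynamics are Markov, $\phi$ becomes a genuine state potential, and invariance follows directly. A final subtlety worth flagging is that optimality should be compared over a common policy class: because the shaping contributes the same constant $-\phi(f_0)$ to the return of every (even history-dependent) policy, a Markov optimal policy of $M$ stays optimal in $M'$ even when competing against history-dependent policies.
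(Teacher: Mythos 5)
Your argument is correct, but it takes a different route from the paper's. You prove invariance by telescoping the discounted shaping rewards along an arbitrary trajectory, showing that $\sum_{t\ge 1}\gamma^{t-1}\bigl(\gamma\phi(f_t)-\phi(f_{t-1})\bigr)$ collapses to the policy-independent constant $-\phi(f_0)$ (using boundedness of $\phi$ and $\gamma<1$ to kill the tail), so every policy's value is shifted by the same amount and the $\arg\max$ is unchanged. The paper instead constructs an augmented MDP $\widehat{M}$ whose state is $(s,g)$ with $g$ the action-count vector, lifts the optimal Q-function $Q^*_M$ to $\widehat{Q}((s,g),a)=Q^*_M(s,a)$, verifies it satisfies the Bellman optimality equation in $\widehat{M}$ (so the lifted policy is optimal there), and only then invokes the potential-shaping theorem of Ng et al., since $\phi$ is a genuine state potential in $\widehat{M}$. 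Your primary argument is more elementary and self-contained --- it never needs Markovianity of $\phi$ or the Ng et al. result as a black box, and it yields the slightly stronger conclusion that the value of \emph{every} policy (including history-dependent ones) is shifted by the same constant, hence optimality transfers in both directions; your secondary remark about lifting to the state $(s_t,f_t,t)$ is essentially the paper's construction. Both arguments share the same unstated caveat for episodic tasks: if the discounted sum is truncated at a policy-dependent termination time $T$, the residual boundary term $\gamma^{T}\phi(f_T)$ does not cancel unless the potential is taken to vanish at (or the shaping is continued into) the absorbing state; since the paper's own proof inherits this from Ng et al., it is not a gap relative to the paper.
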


\begin{proof}

Define an MDP $\widehat{M} = \langle \widehat{S}, \widehat{A}, \widehat{T}, \widehat{R}, \gamma \rangle$, such that
\begin{itemize}
    \item For all $s \in S$ and $g \in \mathbb{Z}_{+}^{|A|}$, $(s, g) \in \widehat{S}$. \\
    ($g$ is the vector of counts of each action.)
    \item $\widehat{A} = A$.
    \item $\widehat{R}((s, g), a, (s', g')) = R(s, a, s') \mathbbm{1}[g, a, g' \mbox{consistent}]$ \\
    (Consistent refers to whether $g'$ is obtained from $g$ on taking action $a$.)
    \item $\widehat{T}((s, g), a, (s', g')) = T(s, a, s') \mathbbm{1}[g, a, g' \mbox{consistent}]$
\end{itemize}

\bigskip
\noindent
Let $Q_{M}^{*}$ be the optimal Q-function for the original MDP $M$. Define \\
\[
\widehat{Q}_{M'}((s, g), a) = Q_{M}^{*}(s, a)
\]

\noindent
Now,
\begin{equation}
\small
\label{eqn:q-opt}
\begin{split}
&\mathbb{E}_{(s', g') \sim \widehat{T}} [\widehat{R}((s, g), a, (s', g')) + \gamma \max_{a'}\widehat{Q}_{M'}((s', g'), a')] \\
&= \mathbb{E}_{(s', g') \sim \widehat{T}} [R(s, a, s') \mathbbm{1}[f, a, g' \mbox{consistent}] + \gamma \max_{a'}Q_{M}^{*}(s', a')] \\
&= \mathbb{E}_{s' \sim T} [R(s, a, s') + \gamma \max_{a'}Q_{M}^{*}(s', a')] \\
&= Q_{M}^{*}(s, a) \\
&= \widehat{Q}_{M'}((s, g), a)
\end{split}
\end{equation}

\noindent
The second step involves expanding out the expectation w.r.t. $\widehat{T}$, removing the inconsistent terms, since they get multiplied by zero, and converting back to expectation w.r.t. $T$.

\noindent
Thus, $\widehat{Q}_{M'}$ satisfies the Bellman optimality equation for $M'$. \\

\bigskip
\noindent
Next, let $\pi_{M}^{*}$ be an optimal policy for $M$. Then,
\begin{equation*}
\pi_{M}^{*}(s) \in \arg\max_{a} Q_{M}^{*}(s, a)
\end{equation*}

\noindent
Defining $\widehat{\pi}_{M'}((s, g)) = \pi_{M}^{*}(s)$, we get
\begin{equation}
\small
\label{eqn:pi-opt}
\begin{split}
\pi_{M'}((s, g)) 
&= \pi_{M}^{*}(s) \\
&\in \arg\max_{a} Q_{M}^{*}(s, a) \\
&= \arg\max_{a} \widehat{Q}_{M'}((s, f), a)
\end{split}
\end{equation}

\noindent
Using equations \ref{eqn:q-opt} and \ref{eqn:pi-opt}, we can conclude that $\widehat{\pi}_{M'}((s, g))$ is optimal in $M'$. \\

Note that $M'$ could admit other optimal policies as well, which could potentially also depend on $g$. 

Since the states in $M'$ contain the history of action counts, our proposed potential-based shaping reward can now be defined as a function of only the state in $M'$. From \cite{ng1999policy}, these shaping rewards do not change the optimal policy.
\end{proof}

\section{Sensitivity Analysis}

To better understand the relation between the LEARN module and RL, we added varying amounts of noise to the output of LEARN. Specifically, Gaussian noise $\mathcal{N}(0, \sigma)$ was added to the potential function as described in Section~\ref{sec:rl-with-lang}, where $\sigma$ was varied from 0.01 to 1.0. The results for Task 8 are shown in Figure~\ref{fig:sensitivity}, from which we can see that the language-based rewards improve over the baseline even with significant amounts of noise. This suggests that the predictions from the LEARN module are fairly robust.

\begin{figure}
\includegraphics[width=\columnwidth]{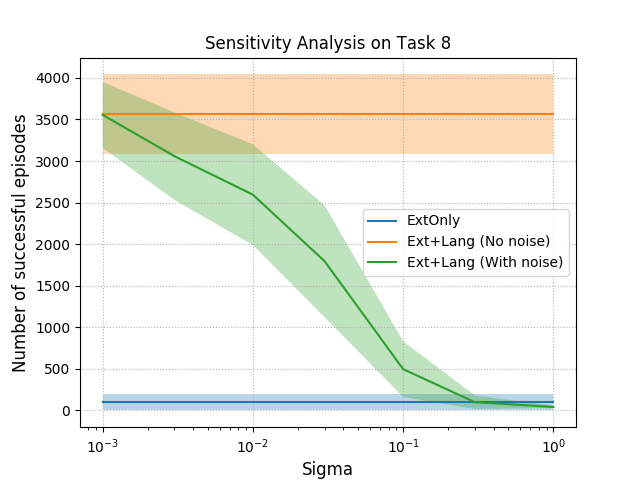}
\caption{Effect of adding noise to the predictions of LEARN: The solid lines represent the mean successful episodes averaged over all tasks, and the shaded regions represent 95\% confidence intervals.}
\label{fig:sensitivity}
\end{figure}

\section{Amazon Mechanical Turk interface}

Figure~\ref{fig:amt-interface} shows the interface used on Amazon Mechanical Turk for data collection.

\begin{figure*}
\centering
\subfigure{
\includegraphics[width=0.6\textwidth]{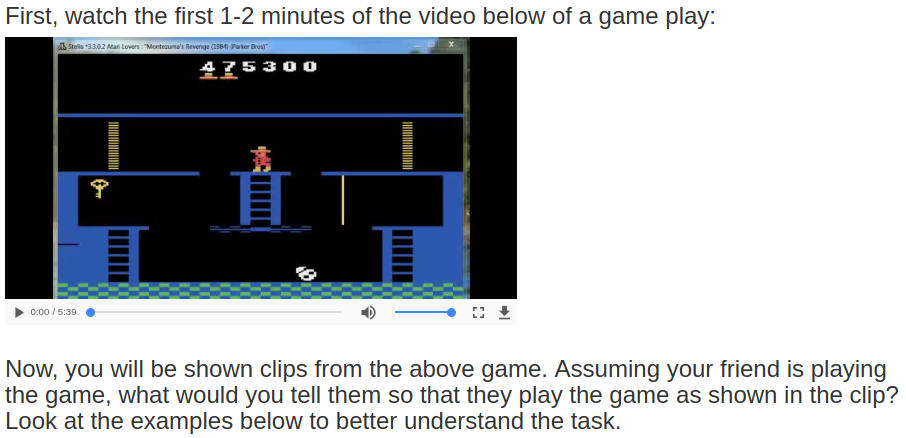}
}\\
\subfigure{
\includegraphics[width=0.6\textwidth]{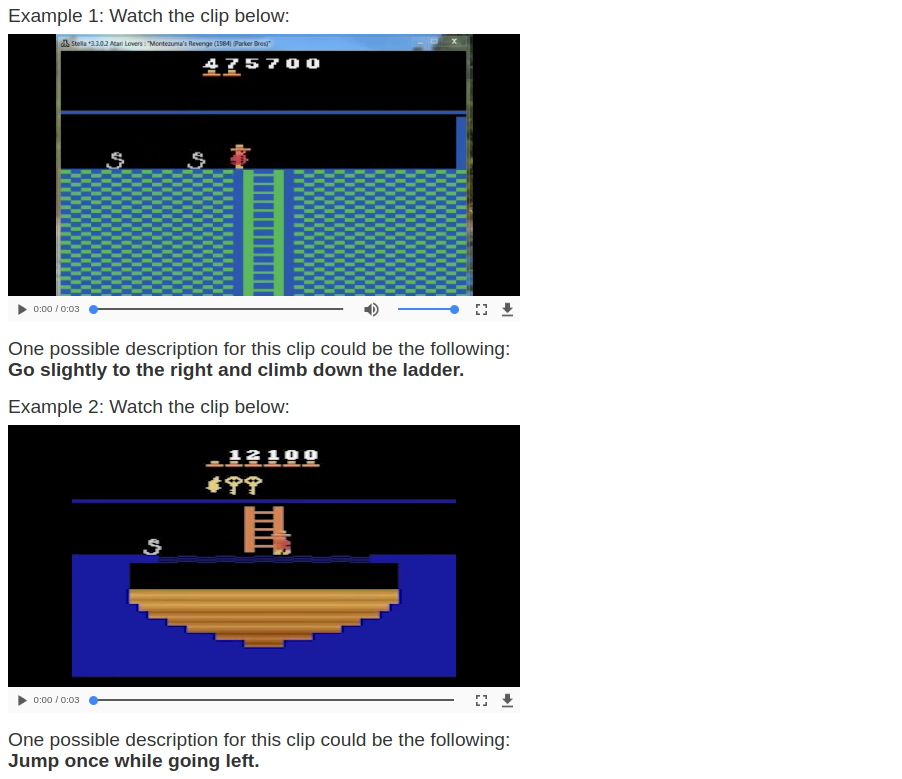}
}\\
\subfigure{
\includegraphics[width=0.6\textwidth]{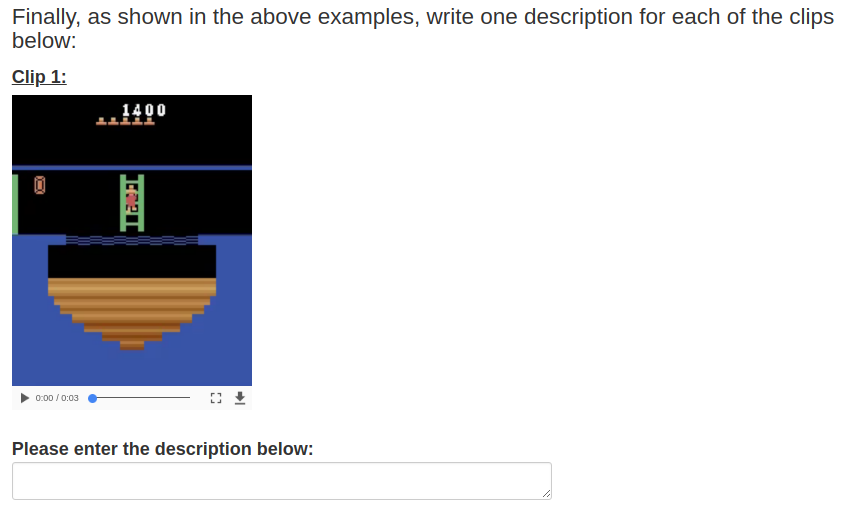}
}
\caption{Sample Mechanical Turk HIT for collecting natural language descriptions.}
\label{fig:amt-interface}
\end{figure*}

\end{document}